\newcommand{\ignore}[1]{}
\newcommand{\ie}{\emph{i.e., }}
\newcommand{\eg}{\emph{e.g., }}
\DeclareMathOperator*{\argmax}{arg\,max}
\newtheoremstyle{lemma_nonumber}
  {}{}{\itshape}{}{\bfseries}{.}{ }{}
\theoremstyle{lemma_nonumber}
\newtheorem*{lemma*}{Lemma}
\begin{document}

\title{From Anchors to Answers: A Novel Node Tokenizer for Integrating Graph Structure into Large Language Models}  

\settopmatter{authorsperrow=4}
\author{Yanbiao Ji}
\authornote{Both authors contributed equally to this research.}
  \affiliation{%
  \institution{Shanghai Jiao Tong University}
  \city{Shanghai}
  \country{China}
}
\email{jiyanbiao@sjtu.edu.cn}

\author{Chang Liu}
\authornotemark[1]
\authornote{This work was done while Chang Liu was an intern at Tencent.}
  \affiliation{%
  \institution{Shanghai Jiao Tong University}
  \city{Shanghai}
  \country{China}
}
\email{isonomialiu@sjtu.edu.cn}

\author{Xin Chen}
  \affiliation{%
  \institution{The Chinese University of Hong Kong}
  \city{Hong Kong}
  \country{China}
}
\email{xchen@se.cuhk.edu.hk}

\author{Dan Luo}
  \affiliation{%
  \institution{Lehigh University}
  \city{Bethlehem}
  \state{PA}
  \country{USA}
}
\email{danluo.ir@gmail.com}

\author{Mei Li}
  \affiliation{%
  \institution{Shanghai Jiao Tong University}
  \city{Shanghai}
  \country{China}
}
\email{mei-li@sjtu.edu.cn}

\author{Yue Ding}
\authornote{Corresponding authors: Yue Ding and Wenqing Lin.}
  \affiliation{%
  \institution{Shanghai Jiao Tong University}
  \city{Shanghai}
  \country{China}
}
\email{dingyue@sjtu.edu.cn}

\author{Wenqing Lin}
\authornotemark[3]
  \affiliation{%
  \institution{Tencent}
  \city{Shenzhen}
  \country{China}
}
\email{edwlin@tencent.com}

\author{Hongtao Lu}
  \affiliation{%
  \institution{Shanghai Jiao Tong University}
  \city{Shanghai}
  \country{China}
}
\email{htlu@sjtu.edu.cn}

\renewcommand{\shortauthors}{Yanbiao Ji et al.}

\begin{abstract}
Enabling large language models (LLMs) to effectively process and reason with graph-structured data remains a significant challenge despite their remarkable success in natural language tasks. Current approaches either convert graph structures into verbose textual descriptions, consuming substantial computational resources, or employ complex graph neural networks as tokenizers, which introduce significant training overhead. To bridge this gap, we present NT-LLM, a novel framework with an anchor-based positional encoding scheme for graph representation. Our approach strategically selects reference nodes as anchors and encodes each node's position relative to these anchors, capturing essential topological information without the computational burden of existing methods. Notably, we identify and address a fundamental issue: the inherent misalignment between discrete hop-based distances in graphs and continuous distances in embedding spaces. By implementing a rank-preserving objective for positional encoding pretraining, NT-LLM achieves superior performance across diverse graph tasks ranging from basic structural analysis to complex reasoning scenarios. Our comprehensive evaluation demonstrates that this lightweight yet powerful approach effectively enhances LLMs' ability to understand and reason with graph-structured information, offering an efficient solution for graph-based applications of language models.
\end{abstract}



\begin{CCSXML}
<ccs2012>
   <concept>
       <concept_id>10002951.10003227.10003351</concept_id>
       <concept_desc>Information systems~Data mining</concept_desc>
       <concept_significance>500</concept_significance>
       </concept>
 </ccs2012>
\end{CCSXML}

\ccsdesc[500]{Information systems~Data mining}


\keywords{Large Language Models, Positional Encoding, Knowledge Graphs}



\maketitle

\section{Introduction}
In recent years, Large Language Models~(LLMs), such as LLaMA~\cite{llama} and GPT~\cite{gpt4}, have revolutionized artificial intelligence.
They have demonstrated powerful capabilities in solving various natural language processing~(NLP) tasks, including question answering~\cite{qa1, qa2}, text generation~\cite{textgen1, textgen2}, and document understanding~\cite{doc1, doc2}.
While LLMs have primarily been applied to text data, an increasing number of applications now involve text data intertwined with structured information represented as graphs. For instance, in social networks, nodes represent entities, while edges capture the relationships between them. Both nodes and edges can also be associated with textual descriptions that detail their attributes.
Since LLMs are primarily designed to model text in a sequential format, applying them to graph-related tasks presents new challenges, particularly in encoding the structural information of graphs~\cite{llmgraph1, llmgraph2}.

While many studies~\cite{graphformer, heterformer, graphfoundation} have attempted to combine language modeling and graph representation learning with medium-sized transformer models such as BERT~\cite{bert} and RoBERTa~\cite{roberta}, efficient graph reasoning with LLMs of billions of parameters remains challenging. To leverage the strength of LLMs for graph structure understanding, existing efforts can be categorized into two groups~\cite{survey2, survey3}: (1) \textbf{Graph Textual Conversion}, which translates a graph's structure into a descriptive textual representation~\cite{graph_adapter, llaga, walklm}. 
These studies typically convert the local context of a target node into textual descriptions that incorporate relevant structural information, and then utilize large language models to predict properties such as node labels and the presence of links. The underlying assumption is that the powerful capabilities of LLMs can generalize to interpret graph-structured knowledge through textual input. 
However, such descriptions typically require a large number of tokens to describe the graph structure, greatly increasing the cost of LLM inference. (2) \textbf{Graph Node Tokenizer}, which generates node embeddings for each node and then projects these embeddings into LLM token space~\cite{graphgpt, higpt, dual}. With the utilization of powerful Graph Neural Networks~(GNNs) as graph node tokenizers, these methods effectively reduce the inference cost by representing the graph structure with compact node tokens. However, the graph representation learning process often brings heavy training overhead. Achieving scalability comparable to LLMs requires an expressive GNN (\eg with elaborate graph convolution paradigms) of similar scale, which introduces additional computational overhead.

To enable effective and efficient LLM reasoning on graphs, a graph encoding paradigm that preserves rich graph structural information without introducing heavy training or inference overhead is needed. This naturally aligns with the motivation of graph positional encoding, which introduces extra embeddings containing structural information to disambiguate nodes and enhance graph representation learning during the training of GNNs and graph transformers~\cite{lap1, dist1, rand1}. In this paper, we introduce an anchor-based graph positional encoding scheme for graph node tokenization, and investigate its integration with LLMs across various graph-related tasks. The core of our method is the strategic selection of key nodes, referred to as anchors, which serve as reference points for encoding the graph topology.
Each node is then represented based on its relative distance to these anchors, effectively capturing the structural information of the graph. Furthermore, we identify the issue of misalignment between the non-Euclidean graph space~(hop-based discrete distance) and the Euclidean embedding space~(continuous Euclidean distance). A rank-preserving pretraining objective is proposed to project the positional embedding into Euclidean space. We then apply task-specific tuning procedures using prompt tuning and LoRA techniques to facilitate better structural understanding of LLMs for downstream tasks. Extensive empirical studies demonstrate that NT-LLM substantially improves LLM performance across a diverse range of graph-related tasks, from basic graph analysis to complex reasoning.
Our main contributions are as follows:
\begin{itemize}[leftmargin=*]
\item We introduce a position-anchored graph encoding approach for LLMs that efficiently preserves crucial structural information while reducing the computational complexity associated with commonly used graph encoding methods.

\item We identify and address the issue of misalignment between the non-Euclidean graph space and the Euclidean embedding space, which hinders the effectiveness of graph positional embedding in graph reasoning with LLMs. 

\item We conduct an extensive empirical evaluation on multiple graph benchmarks, covering a wide range of task complexities and graph types. Our results provide insights into the performance and generalizability of NT-LLM, highlighting its potential for adoption in various graph learning scenarios.
\end{itemize}
\begin{table*}[t]
\footnotesize
    \caption{Comparative analysis of graph positional encoding techniques, including our proposed method.}
    \centering
    \resizebox*{\textwidth}{!}{
    \begin{tabular}{c|cccccc}
    \toprule
         & \textbf{Laplacian Eigenmap}~\cite{lap1}& \textbf{DeepWalk}~\cite{walk1}& \textbf{PGNN}~\cite{pgnn2} & \textbf{HPLC}~\cite{dist2} & \textbf{RFP}~\cite{rand1}&\textbf{Ours} \\
    \midrule
    \textbf{Encoding Scheme}&eigenvectors&random walk&distance&distance, eigenvectors&random feature&distance\\
    \textbf{Local Structure}&$\checkmark$&$\checkmark$&$\checkmark$&$\checkmark$&$\checkmark$&$\checkmark$\\
    \textbf{Global Position}&$\times$&$\times$&$\checkmark$&$\checkmark$&$\times$&$\checkmark$\\
    \textbf{Euclidean Space}&$\checkmark$&$\checkmark$&$\times$&$\checkmark$&$\checkmark$&$\checkmark$\\
    \textbf{Time Complexity}&$O(|\mathcal{V}|^3)$&$O(|\mathcal{E}|)$&$O(|\mathcal{V}|^2log^2(|\mathcal{V}|))$&$O(|\mathcal{E}|log(|\mathcal{V}|)+|\mathcal{V}|log^2(|\mathcal{V}|))$&-&$O(|\mathcal{V}|^2 + |\mathcal{V}| |\mathcal{E}|)$\\
    \bottomrule
    \end{tabular}}
    \label{tab:pe}
    \vspace{-5pt}
\end{table*}
\section{Related Work}
\subsection{Graph Positional Encoding}

Graph Neural Networks (GNNs) have significantly advanced graph representation learning by enabling the extraction of meaningful embeddings from graph-structured data through message-passing mechanisms~\cite{gnn1, gnn2, gnn3, gnn4, gnn5}. However, standard GNN architectures often struggle to differentiate among nodes with similar local structures but different positions within the global graph topology. Graph positional encoding addresses this limitation by enhancing node representations with positional information, allowing the capture of important structural features.

Several approaches have been developed to encode positional information in graphs. Laplacian eigenmaps~\cite{lap1, lap2} utilize the eigenvectors of the graph Laplacian matrix for this purpose. In contrast, random walk encodings~\cite{walk1, walk2, walk3} capture structural information by simulating random walks on the graph. This method encodes the co-occurrence probabilities of nodes during these walks, thereby embedding nodes with similar neighborhoods closer in the embedding space.
Rx`ecently, researchers have introduced Distance Encoding~\cite{dist1, dist3, dist4}, which incorporates structural information by encoding the shortest path distances between nodes. Furthermore, Random Feature methods~\cite{rand1, rand2} have been developed to approximate positional encodings using learnable or predefined random feature maps. 
To provide a comprehensive overview of these approaches, Table~\ref{tab:pe} presents a detailed comparison of various graph positional encoding methods.

\subsection{LLMs in Graph-Related Tasks}

The rapid advancement in LLMs have led to their successful application across various domains, leveraging their powerful sequence modeling capabilities~\cite{llm1, llm2, llm3}. In recent years, there has been a growing interest in applying LLMs to graph-related tasks, aiming to harness their ability to capture long-range dependencies and perform complex reasoning. 

Initial efforts focused on directly feeding textual descriptions of graphs into LLMs to tackle tasks such as node classification and link prediction~\cite{llm_only_1, llm_only_2}. While these methods demonstrated the potential of LLMs in understanding graph data, they faced significant scalability challenges due to the complexity of constructing comprehensive prompts and the loss of crucial structural information during the graph-to-text conversion process.
To address these limitations, subsequent research has explored the integration of Graph Neural Networks (GNNs) with LLMs to better leverage the strengths of both paradigms~\cite{graphgpt, grag, gretri}. One common approach involves using GNNs to generate structure-aware embeddings, which are then fed into LLMs for downstream tasks~\cite{graphgpt, higpt}. More advanced techniques have delved into model fusion training~\cite{engine}, model alignment~\cite{g2p2, grenade}, and the development of LLM agents specifically designed to handle graph data~\cite{graph-agent-1, graph-agent-2}.


\section{Preliminary}
\paragraph{\textbf{Textual Graphs}}
A textual graph is a graph in which nodes and edges are associated with textual attributes. Formally, it is defined as $\mathcal{G} = (\mathcal{V}, \mathcal{E}, \{\mathbf{T}_v\}_{v\in \mathcal{V}}, \{\mathbf{T}_e\}_{e\in \mathcal{E}})$, where $\mathcal{V}$ and $\mathcal{E}$ represent the sets of nodes and edges, respectively. Here, $\mathbf{T}_v$ and $\mathbf{T}_e$ denote the textual attributes corresponding to each node and edge, which are usually represented by natural language descriptions.\footnote{In this work, we assume that the distance between two adjacent nodes is fixed at 1. The study of weighted graphs, where edge distances may vary, is left for future work.}


\paragraph{\textbf{Text Encoding via Language Models}}
Language Models (LMs) have proven to be highly effective at encoding textual attributes in graphs, producing embeddings that capture rich semantic information. 
For a given textual attribute $T_i$ associated with a node or edge $i$, an LM encodes this attribute into an embedding vector as follows:
\begin{equation}
    \mathbf{x}_i = \textbf{LM}(T_i) \in \mathbb{R}^k.
\end{equation}


\paragraph{\textbf{Prompt Tuning for LLMs}}
LLMs are trained on vast corpora of textual data, demonstrating emergent capabilities that facilitate advanced semantic understanding and exceptional task generalization.
Formally, an LLM parameterized by $\theta$ takes as input a sequence of tokens $\mathbf{X} = \{\mathbf{x}_1, \mathbf{x}_2, \ldots, \mathbf{x}_n\}$ along with a task prompt $\mathbf{P}$, and generates an output sequence $\mathbf{Y} = \{\mathbf{y}_1, \mathbf{y}_2, \ldots, \mathbf{y}_r\}$. The probability distribution of the output sequence, conditioned on the concatenated input sequence and prompt $[\mathbf{P};\mathbf{X}]$, is expressed as:
\begin{equation}
    p_{\theta}(\mathbf{Y}|[\mathbf{P};\mathbf{X}]) = \prod_{i=1}^r p_{\theta}(\mathbf{y}_i|\mathbf{y}_{<i},[\mathbf{P};\mathbf{X}]),
\end{equation}
where $\mathbf{y}_{<i}$ represents the prefix of sequence $\mathbf{y}$ up to position $i-1$, and $p_{\theta}(\mathbf{y}_i|\mathbf{y}_{<i},[\mathbf{P};\mathbf{X}])$ denotes the probability of generating token $\mathbf{y}_i$ given the preceding tokens $\mathbf{y}_{<i}$ and the input $[\mathbf{P};\mathbf{X}]$.

Prompt tuning~\cite{ptuning} is an efficient technique for adapting LLMs to specific tasks without modifying the model's parameters. This technique keeps the pretrained LLM frozen, and optimizes a small set of continuous prompt embeddings $\{\mathbf{e}_i\}_{i=1}^{n}$, where $n$ is the number of prompt tokens.
These prompts are generally initialized either randomly or using the embeddings of specific tokens, and are subsequently optimized throughout the training process. Formally, the prompt embeddings can be represented as:
\begin{equation}
    \mathbf{E} = [\mathbf{e}_1, \mathbf{e}_2, ..., \mathbf{e}_n]^T,
\end{equation}
where the dimension of the embedding space is $d$, and $\mathbf{E} \in \mathbb{R}^{n \times d}$. The prompt embeddings can be generated by a small trainable mapping network $\Phi$:
\begin{equation}
    \mathbf{E} = \Phi(\mathbf{X}),
\end{equation}
where $\mathbf{X}$ represents the input embeddings to be transformed. This allows for more flexible and expressive prompt representations. The generation process with prompt tuning can be represented as follows:
\begin{equation}
    p_{\theta,\Phi}(\mathbf{Y}|[\mathbf{P};\mathbf{X}]) = \prod_{i=1}^r p_{\theta,\Phi}(\mathbf{y}_i|\mathbf{y}_{<i},[\mathbf{P};\mathbf{X}]),
\end{equation}
where $\theta$ represents the frozen parameters of the pretrained LLM, $\Phi$ is the learnable prompt mapping network, $\mathbf{P}$ is the prompt, $\mathbf{X}$ is the input sequence, and $\mathbf{Y} = \{\mathbf{y}_1, \mathbf{y}_2, ..., \mathbf{y}_r\}$ is the output sequence.

\begin{figure*}[htbp]
    \centering
    \includegraphics[width=.78\textwidth]{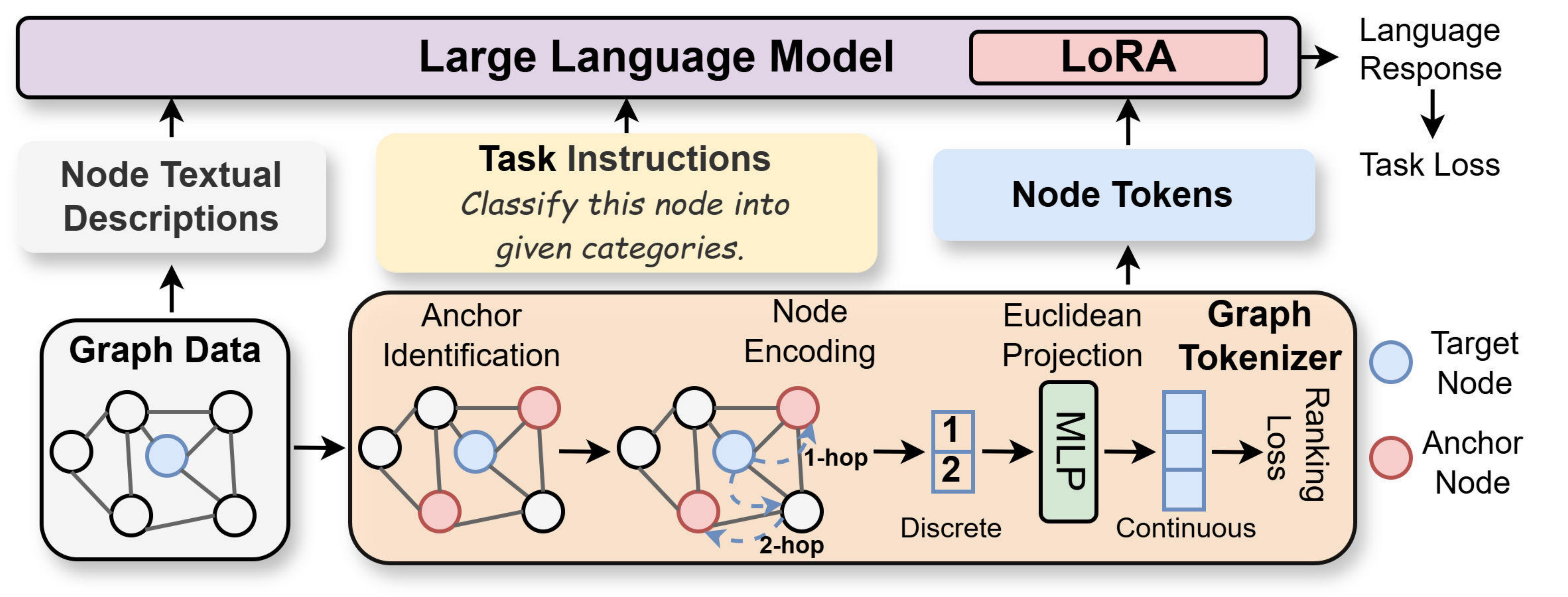}
    \caption{Overview of our proposed \textit{NT-LLM} approach. 
    It consists of two steps:
    (1) Graph Tokenizer: We select key nodes as anchors with a greedy algorithm and compute relative distances between nodes and these anchors to encode the graph structure. The relative distances are then projected into a continuous Euclidean space while preserving the partial ordering of node distances. (2) Task Tuning: We integrate the pretrained embeddings with a large language model using LoRA for task-specific fine-tuning of the LLM, enhancing the performance of downstream graph understanding tasks.}
    \label{fig:pipeline}
\end{figure*}

\section{Methodology}
We propose NT-LLM, which can seamlessly integrate graph-structure knowledge with LLMs through two key components: \textbf{Graph Node Tokenizer} and \textbf{Task-Specific LLM Tuning}. 
The node tokenizer leverages carefully selected anchor nodes to encode the spatial position of each node, and positional embedding pretraining to preserve geometric relationships between nodes.
The task-specific LLM tuning integrates our node position embedding with prompt tuning and low-rank adaptation, which allows LLMs to effectively leverage both textual and graph-based information.  
Figure~\ref{fig:pipeline} illustrates the overall framework of NT-LLM.

\subsection{Graph Node Tokenizer}
In large language models, it is straightforward to inject information about the relative or absolute position of tokens in a sequence via their index. 
However, this approach is not feasible for graphs due to two key differences. 
First, \emph{graphs do not have an inherent linear ordering of nodes}, unlike sequences, where tokens follow a clear order. Nodes in a graph are interconnected in a complex, multidimensional structure, where relationships are defined by edges, and there is no natural start or end. 
Second, \emph{the neighborhood of each node can vary significantly in size and shape}, which makes the concept of a relative or absolute ``position'' less meaningful. 
To address this challenge, we propose a novel graph node tokenizer, which consists of three key steps: anchor node identification, node encoding, and Euclidean projection.


\subsubsection{Anchor Node Identification}
Prior works~\cite{pgnn2, pgnn1} have demonstrated that using anchor nodes can well capture the position of a given node with respect to all other nodes in a graph. 
In particular, the position of a node can be described in terms of its relative distance (\eg shortest path distance) to these anchor nodes. 
For efficient identification of anchor nodes, we implement a greedy anchor selection algorithm with a coverage ratio threshold.
The details of this greedy selection procedure are shown in Algorithm~\ref{alg:selection}.
Given a coverage ratio $CR$ and coverage radius $c$, we start with an empty set $\mathcal{A}$ of anchor nodes and an empty set $N_{cover}$ of covered nodes (Line \ref{alg:line-init}). Here, we define that a node $u$ is covered by a node $v$ only if $u$ is in the $c$-hop subgraph of node $v$; otherwise, $u$ is considered uncovered by $v$.
Then, we iteratively select a new anchor node that covers the maximum set of uncovered nodes in its $c$-hop subgraph $N_c(v)$~(Line \ref{alg:line-anchor-select}) and add these covered nodes to $N_{cover}$ (Line \ref{alg:line-cover-node-merge}) until the size of $N_{cover}$ is no less than $CR * |\mathcal{V}|$ (Line \ref{alg:line-cond}).

\begin{algorithm}[htbp]
\caption{Greedy Algorithm for Anchor Node Selection}
\label{alg:selection}
\begin{algorithmic}[1]
\Require Graph $G(\mathcal{V},\mathcal{E})$, target coverage ratio $CR$, coverage radius $c$
\Ensure Set of anchor nodes $\mathcal{A}$
\State Initialize $\mathcal{A} \gets \emptyset$, $N_{cover} \gets \emptyset$ \label{alg:line-init}

\While{$|N_{cover}| < CR*|\mathcal{V}|$} \label{alg:line-cond}

    \State $anchor \gets \argmax_{v \in \mathcal{V} \setminus \mathcal{A}} |N_c(v) \setminus N_{cover}|$ \label{alg:line-anchor-select}
    
    \If{$|N_c(anchor) \setminus N_{cover}| = 0$}
        \State \textbf{break}
    \EndIf
    \State $\mathcal{A} \gets \mathcal{A} \cup \{anchor\}$
    
    \State $N_{cover} \gets N_{cover} \cup N_c(anchor)$ \label{alg:line-cover-node-merge}
\EndWhile
\State \Return $\mathcal{A}$
\end{algorithmic}
\end{algorithm}

The identified anchor nodes enable us to provide a unique node description for other nodes in terms of their relative distance, capturing both global and local structures within the graph.
\subsubsection{Node Encoding}
Given the identified anchor nodes $\mathcal{A}=\{a_1, a_2, \ldots, a_K\}$, we encode the position of each node $v$ with respect to these anchors: 
\begin{equation}
    \hat{\mathbf{d}}_v = (d_1, d_2, \ldots, d_K),
\end{equation}
\begin{equation}
    d_i = \text{dist}(v, a_i), \quad \forall i \in \{1, \ldots, K\},
\end{equation}
\noindent
where $\text{dist}(v, a_i)$ denotes the number of hops in the shortest path between node $v$ and anchor node $a_i$.

Utilizing relative distance, we can approximate the shortest distance between any two nodes $u$ and $v$ in the graph defined as:
\begin{equation}
    \hat{d}(u, v) := \min_{k \in \{1, \ldots, K\}} \left( \hat{\mathbf{d}}_{u}[k] + \hat{\mathbf{d}}_{v}[k] \right),
\end{equation}
\noindent
where $\hat{\mathbf{d}}_{u}[k]$ means the $k$-th element of $\hat{\mathbf{d}}_{u}$. This approximation estimates the distance by identifying the anchor node that provides the minimal combined distance between $u$ and $v$. 

Note that our approximated shortest path distance may not be the actual shortest path distance. However,  $\hat{d}(u, v)$ actually serves as an upper bound for the true shortest path distance between $u$ and $v$. More formally, the error between the estimated distance and real distance is bounded by the parameters $c$ and $CR$:
\begin{lemma}
\label{lemma:bound1}
    Given any two nodes $u, v$ from a graph, the error of the estimated shortest path distance can be bounded by $2c$ with a probability no smaller than $1-(1-CR)^2$, where $c$ is the coverage radius and $CR$ is the coverage ratio.
\end{lemma}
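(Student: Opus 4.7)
The plan is to separate the claim into a deterministic inequality about $\hat{d}(u,v) - d(u,v)$ whenever at least one of $u$, $v$ is covered by an anchor, and then a probabilistic bound on the event that at least one of them is covered. First I would observe that $\hat d(u,v)$ is always an upper bound on the true shortest path distance $d(u,v)$: for every anchor $a_k$, the triangle inequality gives $d(u,a_k)+d(a_k,v)\ge d(u,v)$, so the minimum over $k$ is also at least $d(u,v)$. Thus the error $\hat d(u,v)-d(u,v)$ is nonnegative, and only the upper bound $\le 2c$ needs to be established.

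Next I would handle the deterministic step. Suppose $u$ is covered, i.e.\ there exists an anchor $a_k\in\mathcal{A}$ with $d(u,a_k)\le c$. Then $\hat{\mathbf d}_u[k]\le c$, and by the triangle inequality $\hat{\mathbf d}_v[k]=d(v,a_k)\le d(v,u)+d(u,a_k)\le d(u,v)+c$. Adding these and taking the minimum over all anchors yields $\hat d(u,v)\le \hat{\mathbf d}_u[k]+\hat{\mathbf d}_v[k]\le d(u,v)+2c$. The symmetric argument covers the case when $v$ is the covered endpoint, so whenever at least one of $u,v$ lies in $N_{cover}$ we have $\hat d(u,v)-d(u,v)\le 2c$.

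Finally I would turn to the probability. Since the greedy procedure guarantees $|N_{cover}|\ge CR\cdot|\mathcal{V}|$, a node drawn uniformly at random from $\mathcal{V}$ is covered with probability at least $CR$. Treating $u$ and $v$ as independent uniform draws, the probability that both are uncovered is at most $(1-CR)^2$, so the complementary event, that at least one endpoint is covered, has probability at least $1-(1-CR)^2$. Combining with the deterministic bound gives the lemma.

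The main obstacle I anticipate is not the triangle-inequality manipulation, which is routine, but making the probability statement precise: the lemma does not specify the distribution over $(u,v)$, so I would explicitly state the assumption that $u$ and $v$ are sampled independently and uniformly from $\mathcal{V}$ (or, more conservatively, note that the bound is sharpest under that assumption and only weakens under positively correlated sampling). A secondary subtlety is that ``coverage ratio $CR$'' should be interpreted as a lower bound on $|N_{cover}|/|\mathcal{V}|$ attained at termination of Algorithm~\ref{alg:selection}; I would flag that the argument uses only this lower bound and does not depend on which particular anchors the greedy procedure selected.
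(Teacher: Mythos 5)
Your proof is correct and follows essentially the same two-step decomposition as the paper's: a deterministic $2c$ bound on $\hat d(u,v)-d(u,v)$ whenever at least one endpoint is covered by an anchor, combined with the $(1-CR)^2$ bound on the probability that neither endpoint is covered. The only cosmetic difference is that you bound $d(v,a_k)\le d(u,v)+c$ via the forward triangle inequality, whereas the paper uses the reverse triangle inequality $d(u,v)\ge |d(u,a^*)-d(a^*,v)|$ after a without-loss-of-generality assumption; your explicit flagging of the implicit assumption that $u$ and $v$ are drawn independently and uniformly from $\mathcal{V}$ is a fair point that the paper's own proof also leaves unstated.
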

\begin{proof}[PROOF]
    Given node pair $u, v$ from graph and a set of anchor nodes $\mathcal{A} = \{a_1, a_2, \ldots, a_K\}$, assume $u$ is covered by an anchor node, denoted as $a^*$, then the shortest path distance between them $d(u, a^*)\le c$. Without loss of generality, we assume $d(u, a^*)<d(a^*, v)$. Note that the following error bound still holds if $d(u, a^*)>d(a^*, v)$. The error of the estimated shortest path distance between $u, v$ is bounded by
\begin{align*}
    err(u, v)&=\hat{d}(u, v)-d(u, v)\\
    &=min_{a\in \mathcal{A}} \big( d(u, a)+d(a, v) \big)-d(u, v)\\
    &\le d(u, a^*) + d(a^*, v)-d(u, v)\\
    &\le d(u, a^*) + d(a^*, v) - |d(u, a^*) - d(a^*, v)|\\
    &=2d(u, a^*)\\
    &\le 2c
\end{align*}
The error bound holds when either $u$ or $v$ are covered by some 
anchor nodes. When neither $u$ nor $v$ is covered, this error is unbounded. The probability for this case is $(1-CR)^2$. Therefore, the probability that the error of our estimated distance is bounded is $1-(1-CR)^2$.
\end{proof}

\subsubsection{Euclidean Projection}
\label{sec:pretrain}
While anchor-based encoding enables the representation of spatial positions for nodes in a graph, it is not directly applicable for positional embeddings in LLMs. This is because shortest path distances in graph space do not correspond to distances in Euclidean space, potentially distorting actual spatial relationships. Next, we first elaborate on this argument and then present our solution. 

\begin{figure}[htbp]
\centering
\includegraphics[width=\linewidth]{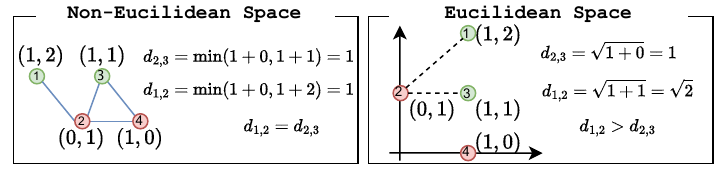}
\caption{A toy example illustrating the discrepancy between relative distance encoding in non-Euclidean graph space and the required Euclidean space for LLM positional embeddings.}
\label{fig:distance}
\end{figure}

\textbf{Mismatch between Shortest Path Distance and Euclidean Distance}.
In LLMs, positional embeddings reflect the linear order of tokens, where proximity in the sequence corresponds to closeness in the embedding space, adhering to Euclidean-like assumptions. This enables the model to capture local relationships: tokens near each other in the input sequence are also close in the learned embedding space, preserving context and meaning. However, as demonstrated in Figure~\ref{fig:distance}, when nodes 2 and 4 are set as anchor nodes, the shortest path distances between nodes 2 and 3, as well as between nodes 1 and 2, are both 1 in the graph's non-Euclidean space. In contrast, the corresponding Euclidean distances would be 1 and $\sqrt{2}$, respectively. This discrepancy in relative distances between node pairs leads to a mismatch between shortest path and Euclidean distances.

To address this issue, we propose a pretraining approach that maps the distance encoding from non-Euclidean to Euclidean space, aiming to preserve geometric relationships between nodes. The necessity of this mapping is further justified through ablation studies in Section~\ref{sec:ablation}.
The pretraining process involves a learnable function $\phi: \mathbb{R}^K \rightarrow \mathbb{R}^N$ that projects the anchor-based encoding into Euclidean space:
\begin{equation}
    \mathbf{e}_v = \phi(\hat{\mathbf{d}}_v) \in \mathbb{R}^N
\end{equation}
where $\mathbf{e}_v$ represents the transformed node embedding for node $v$. 

To preserve geometric relationships among nodes in the embedding space, we propose a rank-preserving training objective based on maximum likelihood estimation. The objective is to maximize the posterior probability $p(\Phi|>)$, where $\Phi$ denotes the parameters of the mapping function $\phi$, and $>$ represents the desired order of distances. Assuming independence for the ordering of each pair of distances, we formulate the likelihood function as:
\begin{align*}
p(>|\Phi)=&\prod_{(u,v),(i,j)\in \mathcal{E}}  p\left(\hat{d}_{\phi}(u,v)>\hat{d}_{\phi}(i,j)|\Phi\right)^{\mathbb{I}(\hat{d}(u,v)>\hat{d}(i,j))} \\
& \cdot\left(1-p\left(\hat{d}_{\phi}(u,v)>\hat{d}_{\phi}(i,j)|\Phi\right)\right)^{\mathbb{I}(\hat{d}(u,v)\leq \hat{d}(i,j))}\tag{\refstepcounter{equation}\theequation}
\end{align*}
where $\hat{d}(u,v)$ denotes the estimated distance between nodes $u$ and $v$, and $\hat{d}_{\phi}(u,v)$ represents the Euclidean distance between their corresponding mapped embeddings $\mathbf{e}_u$ and $\mathbf{e}_v$. 
We can model the probability of one distance being greater than another using the logistic function $\sigma$:
\begin{equation}
    p\left(\hat{d}_{\phi}(u,v)>\hat{d}_{\phi}(i,j)|\Phi\right):=\sigma(\hat{x}_{u,v,i,j}(\Phi)),
\end{equation}
where $\hat{x}_{u,v,i,j}(\Phi)$ denotes the difference between the Euclidean distances of the two pairs of mapped embeddings.

By maximizing the log-posterior, which is equivalent to minimizing the negative log-likelihood function, we derive the rank-preserving training objective:
\begin{align*}
\min_{\Phi} \ \ \mathcal{L} &= -\sum_{(u,v),(i,j)\in \mathcal{E}}\mathbb{I}(\hat{d}(u,v)>\hat{d}(i,j))\ln\sigma(\hat{x}_{u,v,i,j}(\Phi)) \\
 &+ \mathbb{I}(\hat{d}(u,v)\leq \hat{d}(i,j))\ln(1-\sigma(\hat{x}_{u,v,i,j}(\Phi))) \tag{\refstepcounter{equation}\theequation} 
\end{align*}
This objective function encourages the ranking of distances between nodes in the embedding space to align with the ranking of their corresponding shortest path distances in the graph. 

To facilitate practical implementation, we reformulate the objective as a binary cross-entropy (BCE) loss:
\begin{equation}
    \min_\Phi \ \ \mathcal{L} = \sum_{(u,v),(i,j)\in \mathcal{E}}\text{BCE}\left( \sigma\left( \lVert \mathbf{e}_u - \mathbf{e}_v \rVert_2 - \lVert \mathbf{e}_i - \mathbf{e}_j \rVert_2 \right), y \right),
\end{equation}
where $y$ captures the relative ordering of distances:
\begin{equation}
    y = \mathbb{I}(\hat{d}(u,v)>\hat{d}(i,j))=
    \begin{cases}
        1, & \text{if } \hat{d}(u, v) > \hat{d}(i, j), \\
        0, & \text{otherwise}.
    \end{cases}
\end{equation}

This pretraining approach ensures that the positional embeddings derived from graph structures are compatible with the Euclidean assumptions of LLM architectures while preserving the essential spatial relationships between nodes.



\subsubsection{Time Complexity Analysis}

The time complexity of the greedy algorithm for anchor node selection can be analyzed in two parts:

\paragraph{Initialization} Each node performs a BFS to construct its c-hop neighborhood, requiring $O(|\mathcal{V}|\cdot |\mathcal{E}|)$ time, where $|\mathcal{V}|$ is the number of nodes and $|\mathcal{E}|$ is the number of edges in the graph. The c-hop neighborhoods are stored for each node.

\paragraph{Anchor Selection} In each iteration, the algorithm selects an anchor and updates the coverage for remaining nodes. The worst-case time complexity for this part is $O(|\mathcal{V}|^2)$.
This is because:
    \paragraph {1}Selecting an anchor requires examining all uncovered nodes in each candidate's c-hop neighborhood ($O(|\mathcal{V}|$) in the worst case).
    \paragraph{2} After selecting an anchor, the algorithm must update the uncovered node counts for all other nodes' c-hop neighborhoods that overlap with the newly covered area ($O(|\mathcal{V}|$) nodes to update, each potentially affecting $O(|\mathcal{V}|)$ other neighborhoods).

The total time complexity is thus $O(|\mathcal{V}|\cdot|\mathcal{E}| + |\mathcal{V}|^2$.

\subsection{Task-Specific LLM Tuning}

We now focus on adapting LLMs to leverage graph-based knowledge for specific downstream tasks. Our approach integrates prompt tuning with Low-Rank Adaptation (LoRA) for efficient and effective task-specific fine-tuning.

\subsubsection{Prompt Tuning}
We employ prompt tuning to incorporate pretrained graph-based knowledge into the LLM. This technique introduces a small, trainable adapter layer that transforms our pretrained anchor-based node embeddings to soft prompts. These soft prompts serve as a learned prefix to the input, guiding the model's attention and output generation.

The generation process, including our prompt tuning adapter, can be formally expressed as:

\begin{equation}
    p_{\theta,\Phi}(Y|G,q)=\prod_{i=1}^r p(y_i|y_{<i},[\mathbf{e}_G; \mathbf{e}_T;\mathbf{e}_q]),
\end{equation}
\noindent
where $\theta$ denotes the frozen LLM parameters, $\Phi$ represents the trainable parameters of the prompt tuning adapters, $\mathbf{e}_G$ is the pretrained positional encoding derived from the graph structure, $\mathbf{e}_T$ is the textual embeddings, and $\mathbf{e}_q$ represents the question designed for corresponding graph tasks. 
The prompt tuning adapter is a shallow neural network that maps the input embeddings to a sequence of continuous prompt tokens. These tokens are prepended to the input sequence before being processed by the LLM. 

\subsubsection{Low-Rank Adaptation (LoRA)}
To further enhance the LLMs' adaptability to graph-structure data, we implement Low-Rank Adaptation (LoRA)~\cite{lora} in conjunction with prompt tuning. LoRA modifies the weight update mechanism of the LLM by introducing low-rank decomposition, allowing for efficient fine-tuning of the model.
For each weight matrix $W \in \mathbb{R}^{dim_1\times dim_2}$ in the LLM, we introduce a low-rank update:

\begin{equation}
    \textbf{W}' = \textbf{W} + \textbf{BA},
\end{equation}
\noindent
where $\textbf{B} \in \mathbb{R}^{dim_1\times r}$ and $\textbf{A} \in \mathbb{R}^{r\times dim_2}$ are low-rank matrices with rank $r \ll \min(dim_1,dim_2)$. This decomposition significantly reduces the number of trainable parameters, as $r$ is typically much smaller than $dim_1$ and $dim_2$. 

During the training process, only $\textbf{A}$ and $\textbf{B}$ are updated while the original weights $\textbf{W}$ remain frozen. The update rule for the LoRA parameters can be expressed as:

\begin{equation}
    \textbf{A}_{t+1} = \textbf{A}_t - \eta \nabla_\textbf{A} \mathcal{L}(\theta, \textbf{A}_t, \textbf{B}_t),
\end{equation}
\begin{equation}
    \textbf{B}_{t+1} = \textbf{B}_t - \eta \nabla_\textbf{B} \mathcal{L}(\theta, \textbf{A}_t, \textbf{B}_t),
\end{equation}
\noindent
where $\eta$ is the learning rate, $\mathcal{L}$ is the task-specific loss function, and $t$ denotes the training iteration.

The combination of prompt tuning and LoRA in our approach enables the model to effectively incorporate graph-structural knowledge while adapting to various downstream tasks.

\begin{figure*}[htbp]
    \centering
    \includegraphics[width=0.9\linewidth]{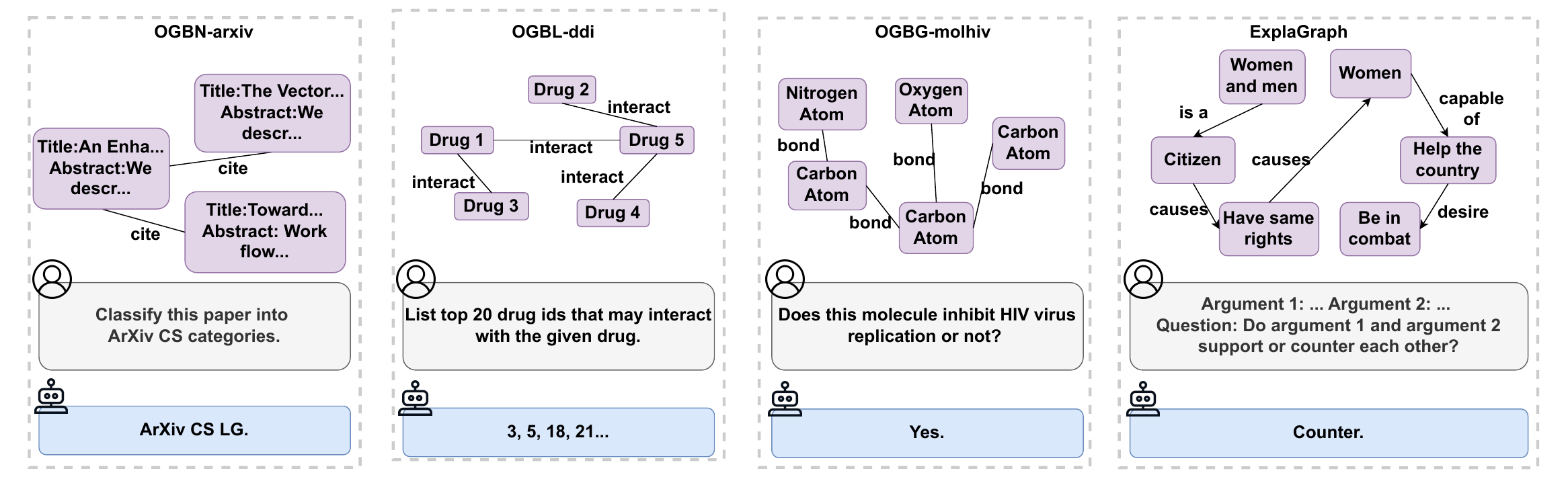}
    \vspace{-10pt}
    \caption{Illustration of dataset characteristics and LLM-based processing workflow for diverse graph-related tasks employed in our experimental setup.}
    \vspace{-10pt}
    \label{fig:dataset}
\end{figure*}

\section{Experiments}
We conduct extensive experiments to demonstrate the effectiveness of our NT-LLM by investigating the following research questions:
\begin{itemize}[leftmargin=*]
    \item \textbf{RQ1}: Can NT-LLM outperform state-of-the-art methods in various graph-related tasks?
    \item \textbf{RQ2}: What does node position encoding learn? Does it capture the spatial information as intended?
    \item \textbf{RQ3}: How do different anchor selection strategies influence the performance of NT-LLM?
    \item \textbf{RQ4}: What influence do different design choices have on NT-LLM?
    \item \textbf{RQ5}: How does our tokenizer compare in efficiency to conventional message-passing GNNs and graph transformers?
\end{itemize}

\subsection{Experimental Settings}
\subsubsection{Datasets} 

We evaluate our approach on diverse graph-based tasks using benchmark datasets from Cora~\cite{cora}, the Open Graph Benchmark (OGB)~\cite{ogb}, and ExplaGraphs~\cite{expla}. Our experiments cover node classification with Cora and OGBN-arxiv, edge prediction using OGBL-ddi, and graph property prediction employing OGBG-molhiv\footnote{For OGBG-molhiv, we use the SMILES strings representing molecules as textual attributes, which are not directly provided by OGB.}. Additionally, we assess knowledge graph question answering tasks using the ExplaGraphs dataset. These datasets encompass a wide range of graph structures and task complexities, allowing for a comprehensive evaluation of our method. Table~\ref{tab:dataset} presents key statistics for each dataset, while Figure~\ref{fig:dataset} illustrates their characteristics in detail.\footnote{Cora, being a similar citation network to OGBN-arxiv, was omitted from Figure~\ref{fig:dataset} to avoid redundancy.}
\begin{table}[htbp]
    \centering
    \vspace{-5pt}
    \caption{Dataset statistics and evaluation metrics. For OGBG-molhiv and ExplaGraphs, \#Nodes and \#Edges counts represent averages across all graphs in the dataset.}
     \resizebox*{.96\linewidth}{!}{
\begin{tabular}{ccccc}
\toprule
\textbf{Dataset}&\textbf{\#Nodes}&\textbf{\#Edges}&\textbf{\#Graphs}&\textbf{Metric}\\
\midrule
Cora&2,708&10,556&1&Accuracy\\
OGBN-arxiv&169,343&1,166,243&1&Accuracy\\
OGBL-ddi&4,267&1,334,889&1&Hits@20\\
OGBG-molhiv&25.5&27.5&41,127&ROC-AUC\\
ExplaGraphs&5.17&4.25&2,766&Accuracy\\
\bottomrule
\end{tabular}}
    \label{tab:dataset}
\end{table}

\subsubsection{Baselines}
We evaluate our proposed method against various baselines, including both traditional graph learning approaches and LLM-based methods:

\begin{itemize}[leftmargin=*]
    \item \textbf{GNN-based methods}: We incorporate widely-adopted GNN architectures, including Graph Convolutional Networks (GCN)~\cite{gcn}, Graph Attention Networks (GAT)~\cite{gat}, and GraphSAGE~\cite{graphsage}. Besides, we also evaluate two graph transformer models: GraphFormers~\cite{graphformer} and Heterformer~\cite{heterformer}.
    
    \item \textbf{LLM-only methods}: We consider approaches that process graph information directly as textual sequences using LLMs. This category includes implementations utilizing zero-shot inference, prompt tuning~\cite{ptuning}, and Low-Rank Adaptation (LoRA)~\cite{lora}.
    
    \item \textbf{GNN-LLM hybrid methods}: We compare our approach with state-of-the-art methods that integrate GNNs and LLMs. Specifically, we include GraphGPT~\cite{graphgpt} and GraphTranslator~\cite{trans}, which focus on text-attributed graph representation learning with language models. Additionally, we compare our method with G-Retriever~\cite{gretri} and GRAG~\cite{grag}, which are Graph Retrieval-Augmented Generation (RAG) methods that combine GNNs and LLMs for graph-based text generation tasks.
\end{itemize}

\begin{table*}[t]
    \centering
    \caption{Main results on benchmark datasets. The best performance is highlighted in \textbf{bold} and the second best is \underline{underlined}. $\Delta_{\text{prompt}}$ and $\Delta_{\text{LoRA}}$ represent the improvements over LLM prompt tuning and LoRA baselines, respectively.  * indicates the statistically significant improvements (\ie two-sided t-test with p<0.05) over the compared baseline.}
     \resizebox*{0.68\textwidth}{!}{
    \begin{tabular}{cccccc}
    \toprule
    \multirow{2}{*}{\textbf{Method}} & \textbf{Cora}&\textbf{OGBN-arxiv} & \textbf{OGBL-ddi} & \textbf{OGBG-molhiv} & \textbf{ExplaGraphs} \\
    & \textbf{(Accuracy$\uparrow$)} & \textbf{(Accuracy$\uparrow$)} & \textbf{(Hits@20$\uparrow$)} & \textbf{(ROC-AUC$\uparrow$)} & \textbf{(Accuracy$\uparrow$)} \\
    \midrule
    GCN~\cite{gcn} &0.8147& 0.7360& 0.3707 & 0.7606 & - \\
    GAT~\cite{gat} &0.8352& 0.7366 & 0.4133 & 0.7520 & - \\
    GraphSAGE~\cite{graphsage} &0.8265& 0.7295 & 0.5390 & 0.7558 & - \\
    GraphFormers~\cite{graphformer}&0.8910&0.7431&0.5538&0.7414&-\\
    Heterformer~\cite{heterformer}&0.8761&0.7390&0.5482&0.7505&-\\
    \midrule
    zero-shot&0.6490&0.5406&0.3384&0.6321&0.6679\\
    prompt tuning~\cite{ptuning}&0.7903&0.6971&0.3592&0.6554&0.8224\\
    LoRA~\cite{lora}&0.8194&0.7323&0.3918&0.7529&0.9296\\
    \midrule
    GraphGPT~\cite{graphgpt} &0.9085&0.7637&0.5011&\underline{0.7851}&0.9052\\
    GraphTranslator~\cite{trans}&0.9351&\underline{0.7748}&{0.5425}&0.7764&0.9273\\
    G-Retriever~\cite{gretri}&0.9148&0.7521&0.4573&0.6920&0.9231\\
    G-Retriever LoRA&0.9350&0.7580&0.5296&0.7635&0.9240\\
    GRAG~\cite{grag}&0.9296&0.7492&0.4617&0.6698&0.9242\\
    GRAG LoRA&0.9473&0.7554&0.5386&0.7309&\underline{0.9422}\\
    NT-LLM &\underline{0.9478}&0.7525 &\underline{0.5904} &0.7531 & 0.9332 \\
    $\Delta_{\text{prompt}}$&$\uparrow 19.93\% ^*$&$\uparrow 7.95\%^*$&$\uparrow 74.47\%^*$&$\uparrow 14.91\%^*$&$\uparrow 13.47\%^*$\\
    NT-LLM LoRA &\textbf{0.9531}&\textbf{0.7752} & \textbf{0.6375}& \textbf{0.8045}&\textbf{0.9603}\\
    $\Delta_{\text{LoRA}}$&$\uparrow 16.32\%^*$&$\uparrow 3.02\%^*$&$\uparrow 62.71\%^*$&$\uparrow 6.85\%^*$&$\uparrow 3.30\%^*$\\
    \bottomrule
    \end{tabular}
    }
    \label{tab:main_results}
\end{table*}

\subsection{Implementation Details}
We implement all models and experiments using PyTorch~\cite{pytorch}, PyTorch Geometric~\cite{pyg}, and the HuggingFace Transformers~\cite{libtrans} libraries. All experiments are conducted on two NVIDIA RTX 6000 Ada GPUs, each with 48GB memory.

\subsubsection{Text and LLM Components.} 
\label{llm}
For encoding textual attributes, we employ SentenceBERT~\cite{sbert}. The LLM component of all experiments is based on the pretrained LLaMA3-8B~\cite{llama}. We use LLaMA3-8B in zero-shot (no fine-tuning), as well as in prompt-tuning and LoRA-based fine-tuning settings. During LLM fine-tuning with LoRA, we set the low-rank dimension to 8 and the scaling factor to 16. Optimization uses AdamW~\cite{adamw} with a learning rate of 1e-4 and weight decay of 0.05. Fine-tuning runs for a maximum of 10 epochs with an early stopping patience of 3. The batch size is set to 32 for OGBN-arxiv and OGBL-ddi, and to 2 for OGBG-molhiv and ExplaGraphs, according to dataset size.

\subsubsection{GNN-based Methods.}
Our baseline and hybrid GNN models use a 4-layer architecture with hidden dimensions of 256, ReLU activation, and a dropout rate of 0.5. 
Graph transformer baselines utilize nested GAT architecture combined with transformer layers, where each node uses 5 uniformly sampled neighbors as context. Training runs using the AdamW optimizer for 500 epochs with an early stopping patience of 10, learning rate of 1e-3 and weight decay of 5e-4.

\subsubsection{NT-LLM Implementation.}
In the node tokenizing stage, we set the anchor identification parameters as $c=1$ and $CR=0.7$, and map node encodings via a 3-layer MLP. In the LLM fine-tuning stage, we following the settings in~\ref{llm}.

\subsubsection{GNN-LLM Hybrid Baselines.}
For GNN-LLM hybrid methods, we combine a 4-layer GAT with LLaMA3-8B, following the architecture and hyperparameter settings as described in their papers.

\subsection{Main Results (RQ1)}

Table~\ref{tab:main_results} compares the performance of our proposed NT-LLM method against baselines on five benchmark datasets on the corresponding task, respectively.\footnote{GNNs are unable to perform complex graph reasoning tasks in the ExplaGraphs dataset, thus the corresponding cells are marked with -.}
We have the following key findings:

\begin{itemize}[leftmargin=*]
    \item \textbf{NT-LLM consistently outperforms all baseline methods across various tasks and datasets. } This observation justifies the superiority of NT-LLM and demonstrates its effectiveness and broad applicability in graph learning. 
    \item \textbf{NT-LLM effectively addresses the challenge of enabling LLMs to understand graph structures}. In other words, NT-LLM leverages the strengths of LLMs in understanding textual attributes while benefiting from our proposed node position encoding to capture the graph topology.
    First, NT-LLM outperforms pure LLM and GNN baselines on all datasets. This observation demonstrates that understanding textual attributes and topology are equally important for graph learning tasks. 
    Second, when fine-tuning NT-LLM with LoRA (fine-tuned NT-LLM), its performance surpasses LLM-GNN hybrid approaches. 
    This suggests that NT-LLM is more effective at enabling LLMs to understand graph structures compared to intermediate solutions, \ie LLM-GNN hybrid approaches.
    \item \textbf{The superiority of NT-LLM in graph understanding comes from our proposed node position encoding}. In particular, the OGBL-ddi dataset lacks textual attributes. As we can see, LLM methods perform worse than GNN baseline methods, which highlights their limitations in capturing topological information from graph data. Unlike LLM methods, our proposed NT-LLM, despite not using GNNs, outperforms all baselines with over 60\% improvement compared to LLM methods, demonstrating its ability to effectively encode graph structure.
\end{itemize}

In conclusion, NT-LLM shows superior performance and adaptability across various graph-related tasks and datasets. The improvements over state-of-the-art baselines, even in the absence of textual attributes, highlight the effectiveness of our proposed method in capturing both textual and structural information.

\subsection{Understanding Node Position Encoding (RQ2)}
To understand what node position encoding learns, in this section, we provide visualization for the learned node position embedding on the Cora dataset to gain further insights. 
We select this dataset because, in Cora, nodes from the same class tend to be naturally closer in the graph structure. This property allows us to directly evaluate the quality of the node position embeddings by observing how well they align with the class labels.

Figure~\ref{fig:visual_on_cora} illustrates the embeddings before and after the transformation in positional embedding pretraining, shown against class labels. 
Prior to the transformation, nodes belonging to the same class can be separated distantly in the embedding space. 
However, after applying the transformation, these nodes are effectively projected into the same region, highlighting the efficacy of our pretraining approach in capturing the underlying semantic relationships among nodes.
For instance, the green dots, which are dispersed before the transformation, become densely clustered afterward.


\begin{figure}[t]
    \centering
    \vspace{-5pt}
    \includegraphics[width=\linewidth]{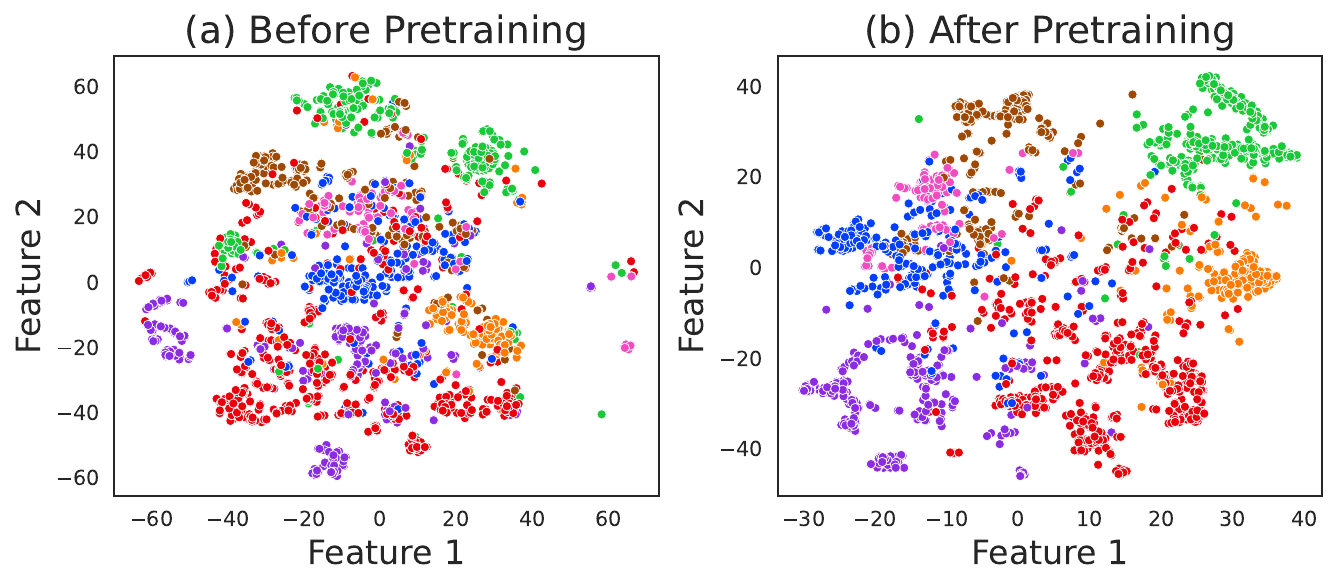}
    \caption{Embeddings visualized before and after the transformation in pretraining on Cora dataset. The colors represent the ground truth labels of nodes.}
    \label{fig:visual_on_cora}
    \vspace{-15pt}
\end{figure}

\subsection{Anchor Selection Strategies Impact (RQ3)}
Since anchor nodes offer a comprehensive view of the graph structure, different strategies for identifying anchor nodes may impact NT-LLM’s ability to comprehend the graph. 
In this section, we conduct an extensive evaluation of various anchor selection strategies, on three datasets, \ie Cora, OGBN-arxiv and OGBL-ddi, using a fixed seed and the NT-LLM architecture.
Subsequently, the positional embeddings are pretrained following the same procedure outlined in Section~\ref{sec:pretrain}.
\begin{table}[htbp]
\centering
\small
\caption{Comparison of anchor selection strategies across three datasets. The highest performance for each dataset is shown in \textbf{bold}.}
\vspace{-5pt}
\label{tab:selection}
 \resizebox*{.96\linewidth}{!}{
\begin{tabular}{cccc}
\toprule
\textbf{Strategy} & \textbf{Cora}& \textbf{OGBN-arxiv} & \textbf{OGBL-ddi} \\
\midrule
Degree & 0.9172 & 0.7312 & 0.5731 \\
Random & 0.8891 & 0.6783 & 0.5019 \\
Closeness~\cite{closeness} & 0.8931 & 0.6392 & 0.4852 \\
Eigenvector~\cite{lap2} & 0.8424 & 0.6105 & 0.4736 \\
PageRank~\cite{pagerank} & 0.8703 & 0.6641 & 0.5127 \\
Betweenness~\cite{betweenness} & 0.8539 & 0.6428 & 0.4967 \\
HPLC~\cite{dist2}&0.9174&0.7411&0.5613\\
Ours & \textbf{0.9478} & \textbf{0.7525} & \textbf{0.5904} \\
\bottomrule
\end{tabular}}
\end{table}

Table~\ref{tab:selection} presents the experimental results. Our method achieves the best performance among all evaluated strategies, surpassing traditional centrality-based approaches (such as Degree and PageRank~\cite{pagerank}), random selection, and the landmark-based HPLC~\cite{dist2}.

To provide a clearer insight into the advantage of our anchor selection strategy, we compare the anchor nodes selected by different strategies on Cora dataset in Figure~\ref{fig:dis}. 
The anchor nodes selected by our method are more evenly distributed across the graph structure. 
In contrast, methods such as Degree, HPLC, Closeness, PageRank, and Eigenvector focus on selecting “important” nodes but fail to provide broad coverage, particularly of nodes located at considerable distances from the graph’s central area.


\begin{figure}[htbp]
    \centering
    \includegraphics[width=\linewidth]{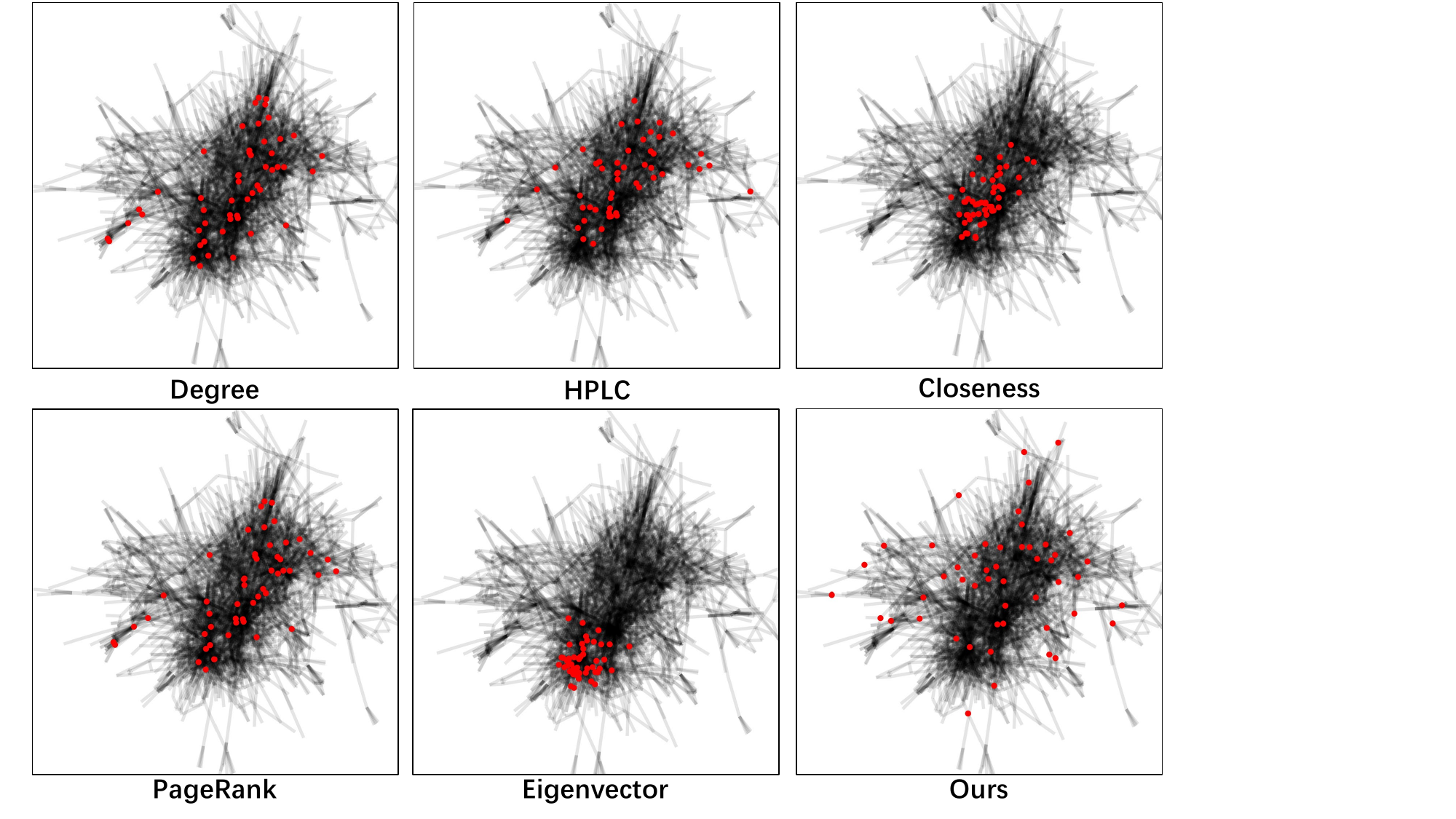}
    \caption{Distribution of anchor nodes (marked in red) selected by different strategies on the Cora dataset. Our method achieves a more even distribution, effectively covering the peripheral regions of the graph.}
    \label{fig:dis}
    \vspace{-5pt}
\end{figure}

\vspace{-5pt}
\subsection{Ablation Studies (RQ4)}
\label{sec:ablation}
In this section, we conduct extensive ablation studies to investigate the effectiveness of each component in NT-LLM, and justify our model design choices. 


\subsubsection{Impact of Model Components}
NT-LLM has specific design features, including the node position encoding, its corresponding pretraining task, and two different strategies for LLMs to leverage node position encoding, \ie prompt tuning and low-rank adaptation. 
We evaluate the performance of each variant of our model on five datasets as follows:

\begin{itemize}[leftmargin=*]
    \item \textbf{w/o PE}: The NT-LLM without positional encoding, using raw node features as input to the LLM. 
    \item \textbf{w/o Pre}: The NT-LLM without the distance transformation pretraining module, using concrete anchor-based distances as node position embeddings.
    \item \textbf{w/o PT}: The NT-LLM without the prompt tuning module, directly inputting all embeddings into the LLM.
\end{itemize} 

Table~\ref{tab:ablation} presents the results of the ablation study, which evaluates the impact of removing individual components from the proposed method. The observed performance drop across all datasets confirms the importance and complementary nature of each component within the method.
In particular, we observe that node position encoding pretraining is critical for NT-LLM. The variant without pretraining (w/o Pre) experiences a significant performance drop when the pretraining module is removed, supporting our argument in Section~\ref{sec:pretrain}. This is due to the mismatch between shortest path and Euclidean distances, which distorts actual spatial relationships. Therefore, positional embedding pretraining is an indispensable component of NT-LLM.


\begin{table}[t]
\tabcolsep 0.02in
    \centering
    \caption{Performance comparison of NT-LLM variants across four datasets. Best results for each dataset are in \textbf{bold}.}
    \begin{tabular}{cccccc}
    \toprule
    \textbf{Variant} & \textbf{Cora} &\textbf{arxiv} & \textbf{ddi} & \textbf{molhiv} & \textbf{ExplaGraphs} \\
    \midrule
    NT-LLM &\textbf{0.9478}&\textbf{0.7525}&\textbf{0.5904}&\textbf{0.7531}& \textbf{0.9332} \\
    w/o PE &0.8070&0.6971&0.3592&0.6554& 0.8224 \\
    w/o Pre &0.8195&0.6538&0.3791&0.6419& 0.7671 \\
    w/o PT &0.7864&0.5904&0.3460&0.5834&  0.7024 \\
    \bottomrule
    \end{tabular}
    \label{tab:ablation}
    \vspace{-10pt}
\end{table}


\subsubsection{Impact of Hyperparameters}

\begin{figure}[h]
    \centering    \subfloat{\includegraphics[width=0.45\textwidth]{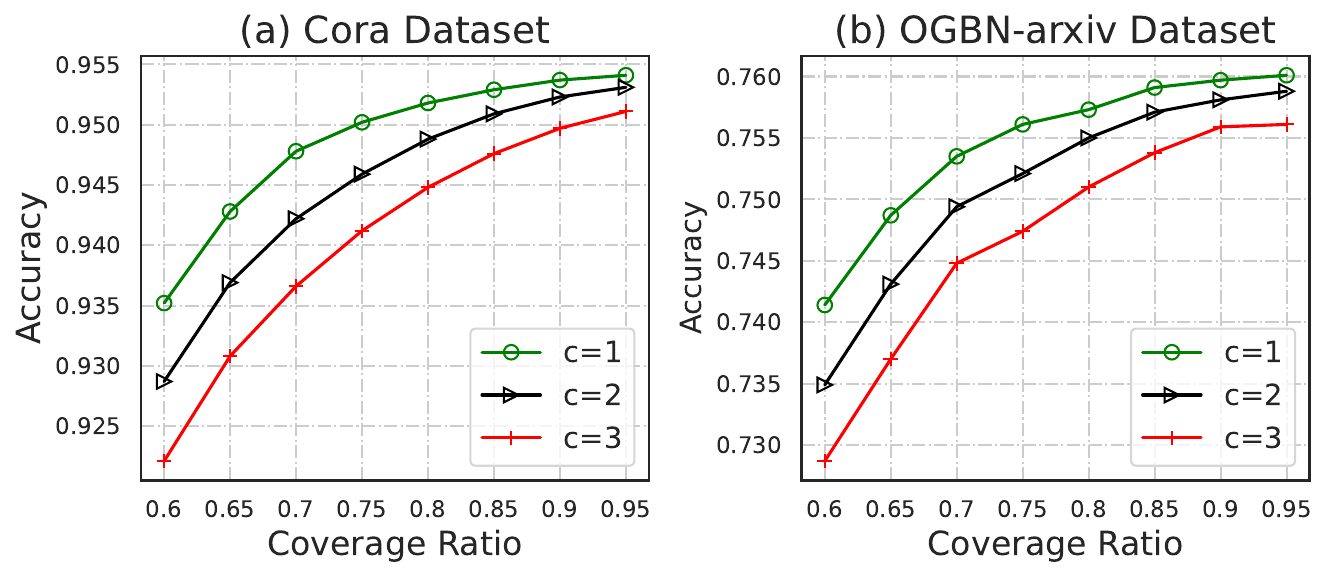}}
    \hfill
    \subfloat{\includegraphics[width=0.45\textwidth]{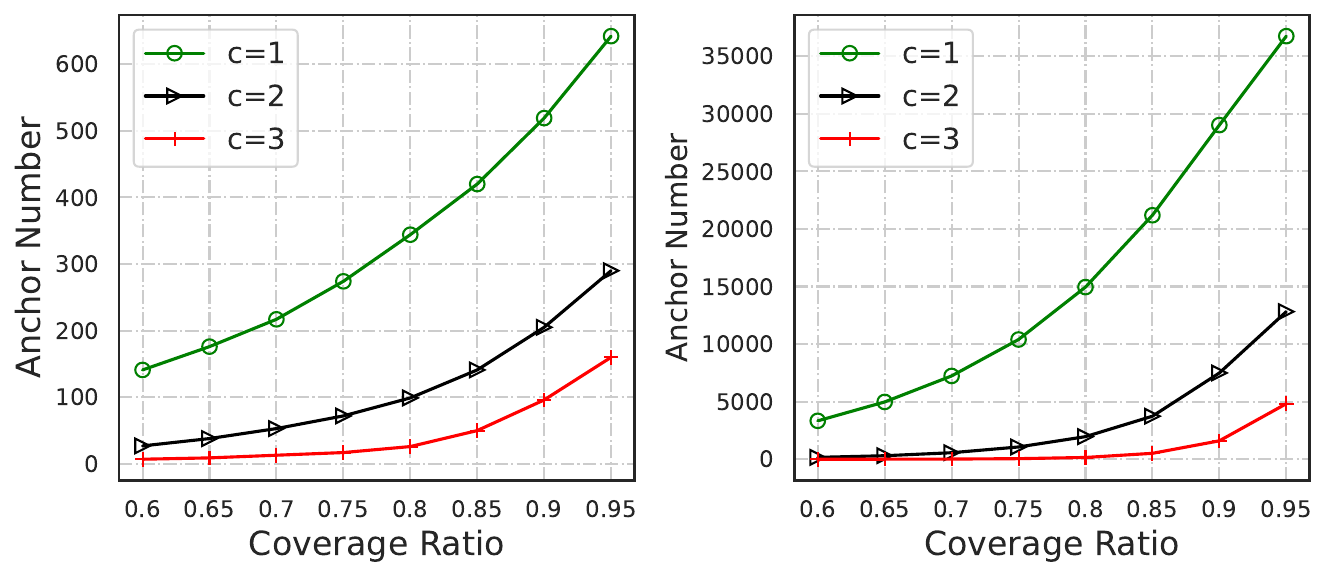}}
      \caption{Effects of coverage radius ($c$) and coverage ratio ($CR$) on model accuracy and the number of anchor nodes for the Cora and OGBN-arxiv datasets. The top row shows the impact on model accuracy, while the bottom row illustrates the changes in the number of anchor nodes as $c$ and $CR$ vary.}
    \label{fig:hyper}
    \vspace{-10pt}
\end{figure}
We investigate the impact of two key hyperparameters in NT-LLM: the coverage radius $c$ and the coverage ratio $CR$. Figure \ref{fig:hyper} presents the relationships between these hyperparameters, model accuracy and the number of anchor nodes. 
The results demonstrate that smaller values of $c$ and larger values of $CR$ generally lead to a better performance. This trend aligns with the error bound established in Lemma~\ref{lemma:bound1}. Notably, we observed that the number of anchor nodes increases exponentially as $c$ decreases and $CR$ increases. This relationship underscores the importance of carefully selecting these hyperparameters to balance computational complexity and model performance.

\subsection{Tokenizer Efficiency (RQ5)}
The only trainable component in our proposed graph tokenizer is a simple MLP, making it intuitively much more efficient than conventional message-passing GNNs or graph transformers. To validate this, we compare the efficiency of various graph tokenizers, including our own, across multiple datasets. The results are summarized in Table~\ref{tab:eff}. The consistently lower number of trainable parameters and training time demonstrate the efficiency of our tokenizer.

\begin{table}[h]
    \centering
    \caption{Comparison of different tokenization methods based on the number of trainable parameters and training time.}
    \label{tab:eff}
     \resizebox*{.96\linewidth}{!}{
    \begin{tabular}{cccc}
    \toprule
    Dataset & Tokenizer & Trainable Parameters & Training Time\\
    \midrule
   \multirow{3}{*}{Cora}  & GAT & 1.4M & 2min\\
         & GraphFormer & 3.6M & 10min\\
         & NT-LLM & 0.3M & <1min\\
         \midrule
        \multirow{3}{*}{OGBN-arxiv} & GAT & 21.7M & 4h\\
        & GraphFormers & 49.2M & 6h\\
        & NT-LLM & 0.7M &10min\\
        \midrule
        \multirow{3}{*}{OGBL-ddi}& GAT & 2.6M & 3min \\
        &GraphFormers& 6.1M&13min\\
        &NT-LLM & 0.5M & 1min \\
    \bottomrule
    \end{tabular}
}
\end{table}

\vspace{-5pt}
\section{Conlusion}

In the paper, we propose NT-LLM, an anchor-based graph positional encoding approach that enables efficient graph tokenization for LLMs. Our method preserves crucial structural information through anchor nodes selection without requiring extensive textual descriptions or complex GNNs. 
Evaluations across diverse benchmarks demonstrate significant improvements across diverse tasks from node classification to complex reasoning, confirming the effectiveness and efficiency of our proposed NT-LLM method.

\bibliographystyle{ACM-Reference-Format}
\balance
\bibliography{reference}


\end{document}